\pdfoutput=1
\documentclass[11pt]{article}

\usepackage[margin=1in]{geometry}
\usepackage{times}
\usepackage{latexsym}
\usepackage[T1]{fontenc}
\usepackage[utf8]{inputenc}
\usepackage{microtype}
\usepackage{inconsolata}
\usepackage{silence}
\usepackage{amsmath,amssymb,amsthm,mathtools}
\usepackage[authoryear,round]{natbib}
\usepackage[colorlinks=true,linkcolor=blue,citecolor=blue,urlcolor=blue]{hyperref}

\newtheorem{proposition}{Proposition}
\newtheorem{remark}{Remark}
\newtheorem{theorem}{Theorem}

\newcommand{\V}{\mathcal{V}}
\newcommand{\E}{\mathbb{E}}
\newcommand{\KL}{\mathrm{KL}}
\newcommand{\JS}{\mathrm{JS}}

\newcommand{\TC}{\mathrm{TC}}
\newcommand{\Comm}{\mathrm{Comm}}
\newcommand{\Curl}{\mathcal{C}}

\title{Path-Dependent Denoising: A Non-Conservative Field Perspective on Order Collapse in Diffusion Language Models}
\author{Jeonseong Kim}
\date{}

\begin{document}
\maketitle

\begin{abstract}
Diffusion language models (DLMs) offer a structural alternative to autoregressive generation: denoising can update tokens in arbitrary orders or in parallel rather than along a fixed left-to-right chain. In practice, fast DLM decoding remains strongly order-sensitive and often drifts toward autoregressive-like trajectories. We trace this tension to compatibility. At each reverse-time step, a DLM provides local denoising conditionals over the unresolved tokens. Arbitrary-order denoising becomes well defined when these local conditionals compose into order-invariant pseudo-joints.

We formalize this view by defining order-induced pseudo-joints and a local denoising circulation: the log-ratio between the two pseudo-joints obtained by swapping a pair of unresolved positions. This circulation is zero under compatible conditionals, and global order gaps decompose into sums of local circulations along adjacent swaps. We further separate incompatibility-driven path dependence from conditional-dependence error in parallel updates and from order-specific estimation error. The resulting framework provides inference-only diagnostics for testing when DLM decoding is genuinely order-free.
\end{abstract}

\section{Introduction}

Autoregressive language models factorize text generation into a strict left-to-right product of conditional probabilities. This factorization yields clear probabilistic semantics, but it also imposes an unavoidable serial bottleneck: the $i$th token cannot be generated before the preceding $i-1$ tokens have been determined. Diffusion language models seek to bypass this limitation. Unlike autoregressive models, masked or discrete diffusion language models start from highly corrupted text states and gradually recover tokens through iterative denoising. Because denoising steps need not follow a fixed absolute position order, DLMs support arbitrary-order, block-wise, and fully parallel generation at the interface level \citep{austin2021d3pm,li2022diffusionlm,gulrajani2023plaid,sahoo2024mdlm,lou2024sedd,nie2025llada}.

This apparent freedom is precisely why DLMs are often seen as promising candidates for breaking the latency bottleneck of autoregressive inference. In practice, however, a central contradiction is emerging: while the model interface permits arbitrary order, the learned behavior is often far from order-invariant. Many fast DLMs degrade substantially under high decoding parallelism; to preserve quality, effective decoding policies frequently drift back toward left-to-right, confidence-based, or other highly structured unmasking schedules \citep{kim2025ordering,kim2025klass}. More directly, recent concurrent analyses have reported that practical DLMs can revert toward autoregressive-like decoding under stronger parallelism, and that masked diffusion models can exhibit locality and mask-induced context-comprehension failures that further complicate genuinely order-free generation \citep{li2026parallel,piskorz2026masks}.

Existing explanations usually operate at the level of data and tasks. Natural language is inherently sequential, and chain-of-thought supervision further reinforces the pattern that later steps depend on earlier ones. As a result, even a non-autoregressive training objective may lead a DLM to internalize autoregressive-like computation \citep{kim2025ordering,li2026parallel,piskorz2026masks}. These explanations capture important regularities, but they leave the key structural question under-specified. For a fixed reverse-time step, a DLM provides a family of local conditional distributions over unresolved coordinates. The central issue is whether those local conditionals are compatible with a common joint distribution or define only local rules for the decoder.

This perspective separates two levels of analysis. At the model level, different denoising orders induce different pseudo-joints $Q_{\theta,t}^{\pi}$, and local curl measures the log-ratio between nearby orderings. At the decoding level, operator commutators ask whether a particular update rule---argmax, sampling, thresholding, or another scheduler-specific operator---turns that pseudo-joint order gap into path sensitivity of the predictive state. Even with small curl, fully independent parallel updates can still fail when unresolved coordinates have large conditional total correlation. In language tasks, left-to-right order often re-emerges because it both stabilizes context and moves the model onto lower-error conditional contexts.

The paper makes four contributions:
\begin{itemize}
\item \textbf{Compatibility formulation.} We formulate arbitrary-order denoising as a pseudo-joint invariance problem.
\item \textbf{Local circulation diagnostic.} Elementary curl is exactly a swapped pseudo-joint log-ratio, and its expectation under one order is the corresponding order-swap KL divergence.
\item \textbf{Path-dependence characterization.} Exact order consistency on a fixed block holds if and only if every reachable elementary square is curl-free, and any global order gap decomposes along adjacent swaps.
\item \textbf{Failure-mode separation.} Bayes-optimal uniform masking yields zero curl, parallel failure also depends on conditional total correlation, and order preference is governed by order-specific conditional estimation error.
\end{itemize}

The paper is formal and mechanism-oriented. It establishes exact structural facts about pseudo-joints, identifies the operator-dependent route from incompatibility to path sensitivity, separates curl from conditional total correlation, and specifies empirical tests of the compatibility view.

\section{Related Work}

The relationship between this paper and existing literature can be summarized as ``DLM order sensitivity viewed through conditional compatibility.'' Foundational work on diffusion for language and discrete data includes D3PMs \citep{austin2021d3pm}, continuous-text diffusion in Diffusion-LM \citep{li2022diffusionlm}, likelihood-oriented large-scale diffusion language modeling in Plaid \citep{gulrajani2023plaid}, masked discrete diffusion in MDLM \citep{sahoo2024mdlm}, score-entropy discrete diffusion in SEDD \citep{lou2024sedd}, and recent scaling results for LLaDA \citep{nie2025llada}. Together, these papers establish that diffusion-style objectives can be extended to text generation in several distinct ways, and in some regimes can become competitive with strong autoregressive baselines.

A first background line comes from conditionally specified distributions, Markov random fields, Gibbs-style constructions, and dependency networks \citep{besag1974spatial,geman1984stochastic,arnoldpress1989compatible,arnold2001conditional,heckerman2000dependency,hobert1998functional}. These works ask when a family of local conditionals corresponds to a single joint distribution and what happens when compatibility fails. Our elementary-circulation diagnostic is intended as a local, time-dependent witness to that same underlying question for denoising conditionals. A closely related modern empirical line is \citet{wagner2024contests}, who test whether different completion or conditioning orders assign consistent span probabilities in language models. Their results provide a useful consistency-testing precedent for the methodological point that local predictive scores should not automatically be assumed to arise from one order-invariant global probability object.

A second background line comes from orderless or permutation-trained language modeling. NADE and its orderless extensions make variable order explicit inside tractable autoregressive density estimators, while XLNet trains across permutations of factorization order \citep{larochelle2011nade,uria2014deepnade,yang2019xlnet}. These models differ from DLMs in that their semantics are still built around exact factorization objectives. Our focus is on diffusion denoisers whose interface permits arbitrary update order even when the induced local conditionals are not jointly compatible.

Recent work has also started to analyze inference order and decoding strategy more directly. In particular, \citet{kim2025ordering} show both theoretically and empirically that token ordering is a central variable in masked diffusion inference, while \citet{kim2025klass} demonstrate that adaptive fast samplers can materially change the quality--speed trade-off. Concurrent analyses have further highlighted difficulties with truly parallel decoding and broader context-comprehension issues in masked diffusion models \citep{li2026parallel,piskorz2026masks}. We place these phenomena in a common framework of order-induced pseudo-joints: learned incompatibility yields order gaps, conditional dependence yields independent-parallel error even at zero curl, and the scheduler selects contexts on which local conditional estimation error is smaller. The result is a unified diagnostic and mechanistic account of these phenomena.

\section{DLMs as Time-Dependent Conditional Specifications}

\subsection{Local Conditionals Rather Than a Joint}

Consider a token sequence $x=(x_1,\dots,x_L)\in \V^L$ of length $L$, where each $x_i$ belongs to a vocabulary $\V$. In masked diffusion or discrete diffusion language modeling, the forward process gradually replaces original tokens with masks, noisy tokens, or other corrupted states; the reverse process starts from a highly corrupted state and gradually reconstructs fluent text. Let $z_t$ denote the partially observed state at time step $t$, and let $M_t$ denote the set of positions that remain masked or corrupted. A typical denoising predictor is then written as \citep{austin2021d3pm,gulrajani2023plaid,sahoo2024mdlm,lou2024sedd,nie2025llada}
\[
p_\theta(x_i \mid z_t, t), \qquad i\in M_t.
\]

For theoretical analysis, however, it is useful to freeze the observed part of $z_t$ and focus on the currently unresolved coordinates. Let $x=(x_1,\dots,x_m)\in \V^m$ denote an assignment on those active coordinates, with already observed tokens absorbed into the conditioning context. At time $t$, the denoiser therefore induces a family of local conditional specifications
\[
q_{\theta,t}(x_i \mid x_{-i}), \qquad i=1,\dots,m.
\]
This places DLM denoisers closer to the literature on conditionally specified models, Gibbs/MRF constructions, and dependency networks than to a jointly normalized left-to-right factorization \citep{besag1974spatial,geman1984stochastic,arnoldpress1989compatible,arnold2001conditional,heckerman2000dependency,hobert1998functional}. The model supplies a collection of local conditionals; the existence of a single global distribution that realizes them is a separate structural property.

The sampling interface differs sharply from autoregressive models: the model can update tokens outside a fixed left-to-right order. Single-position updates require an unmasking order; multi-position updates yield block-wise or parallel decoding. This freedom makes DLMs appealing, but arbitrary-order denoising requires compatible local conditionals across orders. When different orders induce different generation distributions, the scheduler becomes part of the effective generative rule.

\subsection{Order-Induced Pseudo-Joint Distributions}

Fix an observed context index set $S$ and a block $B\subseteq [L]\setminus S$ of unresolved positions. For any permutation $\pi=(\pi_1,\dots,\pi_{|B|})$ of $B$, the denoiser induces the order-specific pseudo-joint
\[
Q_{\theta,t}^\pi(x_B\mid x_S)
=
\prod_{m=1}^{|B|}
q_{\theta,t}(x_{\pi_m}\mid x_S,x_{\pi_{<m}}).
\]
We call $Q_{\theta,t}^\pi$ a pseudo-joint because it is the sequential product induced by one factorization path. A DLM supports arbitrary-order denoising on a block when these pseudo-joints agree across relevant orders:
\[
Q_{\theta,t}^\pi(x_B\mid x_S)
\approx
Q_{\theta,t}^{\pi'}(x_B\mid x_S).
\]
Order-free generation is thus a claim about pseudo-joint invariance across factorizations.

\subsection{Compatibility and Global Potentials}

The classical compatibility question is whether there exists a joint distribution
\[
\begin{aligned}
&\exists p_{\theta,t}(x)\\
&\text{s.t. } q_{\theta,t}(x_i\mid x_{-i}) = p_{\theta,t}(x_i\mid x_{-i})
\quad \text{for all } i.
\end{aligned}
\]
If such a joint exists, define the global potential
\[
\Phi_\theta(x,t)=\log p_{\theta,t}(x).
\]
This $\Phi_\theta$ serves as an idealized reference object for order-free denoising.

We define the potential through shift-invariant log conditional probability ratios rather than raw logits. If $\ell_i(a\mid x_{-i},t)$ denotes a pre-softmax logit, then the shift $\ell_i(a\mid x_{-i},t)\mapsto \ell_i(a\mid x_{-i},t)+c_i(x_{-i},t)$ leaves the conditional distribution unchanged. Raw logit levels are identifiable only up to a position-wise additive constant, so compatibility and circulation live at the level of log conditional ratios.

Compatible local conditionals make every order-induced pseudo-joint $Q_{\theta,t}^\pi(x_B\mid x_S)$ coincide with the same conditional distribution $p_{\theta,t}(x_B\mid x_S)$, regardless of $\pi$. Incompatible conditionals make order sensitivity structural: the scheduler becomes part of the effective generative rule.

This issue is concrete. Recent consistency-testing work on span probabilities in language models studies whether interchangeable completion or conditioning orders agree on the same span probability object \citep{wagner2024contests}. Those results support the same methodological point here: a global potential interpretation requires order-swapped local predictions to compose into a consistent probability object.

\begin{remark}[Exact compatibility is a restrictive special case]
Exact compatibility imposes a coupled family of equalities across coordinates, conditioning contexts, and reverse-time steps. In flexible neural denoisers trained only through local denoising losses, there is typically no architectural reason for these identities to hold exactly. We therefore treat zero elementary circulation as a meaningful and important special case, but not as the generic default.
\end{remark}

\section{Pseudo-Joints, Curl, and Path Dependence}

\subsection{Local Order Curl as a Pseudo-Joint Log-Ratio}

Fix a partial context $x_S$, two unresolved positions $i,j\notin S$, and candidate tokens $a,b\in\V$. Define the two local order-induced pseudo-joints
\[\begin{aligned}
Q_{\theta,t}^{i\to j}(a,b\mid x_S)
={}
q_{\theta,t}(a\mid x_S)\,q_{\theta,t}(b\mid x_S,x_i{=}a),\\
Q_{\theta,t}^{j\to i}(a,b\mid x_S)
={}
q_{\theta,t}(b\mid x_S)\,q_{\theta,t}(a\mid x_S,x_j{=}b).
\end{aligned}
\]
Define the local order curl
\[
\begin{aligned}
\Curl_{ij}^{a,b}(x_S,t)
=&\;\log q_{\theta,t}(x_i=a\mid x_S)\\
&+\log q_{\theta,t}(x_j=b\mid x_S,x_i=a)\\
&-\log q_{\theta,t}(x_j=b\mid x_S)\\
&-\log q_{\theta,t}(x_i=a\mid x_S,x_j=b).
\end{aligned}
\]

\begin{theorem}[Local curl equals a pseudo-joint log-ratio]
For every $x_S,t,i,j,a,b$,
\[
\Curl_{ij}^{a,b}(x_S,t)
=
\log \frac{Q_{\theta,t}^{i\to j}(a,b\mid x_S)}{Q_{\theta,t}^{j\to i}(a,b\mid x_S)}.
\]
Consequently, $\Curl_{ij}^{a,b}(x_S,t)=0$ for all $a,b$ if and only if
\[
Q_{\theta,t}^{i\to j}(\cdot,\cdot\mid x_S)
=
Q_{\theta,t}^{j\to i}(\cdot,\cdot\mid x_S)
\]
as distributions. Moreover,
\[
\begin{aligned}
&\E_{(a,b)\sim Q_{\theta,t}^{i\to j}(\cdot,\cdot\mid x_S)}
\bigl[\Curl_{ij}^{a,b}(x_S,t)\bigr]\\
&\qquad=
\KL\bigl(
Q_{\theta,t}^{i\to j}(\cdot,\cdot\mid x_S)
\,\|\,
Q_{\theta,t}^{j\to i}(\cdot,\cdot\mid x_S)\bigr).
\end{aligned}
\]
\end{theorem}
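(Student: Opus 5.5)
The proof is a direct unfolding of definitions, organized in three steps matching the three claims of the theorem.

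\emph{Step 1: the identity.} Take logarithms of the two local pseudo-joints. Since $Q_{\theta,t}^{i\to j}(a,b\mid x_S)$ is the product $q_{\theta,t}(a\mid x_S)\,q_{\theta,t}(b\mid x_S,x_i{=}a)$, its logarithm is the sum of the first two terms in the definition of $\Curl_{ij}^{a,b}(x_S,t)$. Likewise, $\log Q_{\theta,t}^{j\to i}(a,b\mid x_S)$ is the sum of the last two terms, with their signs reversed. Subtracting the two logarithms gives the claimed identity. I will state one convention explicitly: all conditionals are assumed strictly positive, as they are for softmax outputs, so every logarithm is finite. If zeros were allowed, one would adopt the usual conventions $\log(0/0)=0$ and $\log(c/0)=+\infty$, and the rest of the argument would go through unchanged on the common support.

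\emph{Step 2: the equivalence.} First I need to confirm that both objects are genuine probability distributions on $\V\times\V$. Summing $Q_{\theta,t}^{i\to j}(a,b\mid x_S)$ over $b$ gives $q_{\theta,t}(a\mid x_S)$, because the inner conditional is normalized; summing that over $a$ gives $1$. The same argument applies to $Q_{\theta,t}^{j\to i}$. With this in hand, the equivalence follows from Step 1. $\Curl_{ij}^{a,b}=0$ for all $a,b$ holds exactly when the ratio of the two pseudo-joints equals $1$ pointwise, that is, when they agree as functions on $\V\times\V$, which is equality as distributions.

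\emph{Step 3: the KL formula.} Taking the expectation of the identity from Step 1 under $(a,b)\sim Q_{\theta,t}^{i\to j}$ gives $\sum_{a,b} Q^{i\to j}\log(Q^{i\to j}/Q^{j\to i})$, which is the definition of $\KL(Q^{i\to j}\,\|\,Q^{j\to i})$. As a remark, Gibbs' inequality then shows that the expected curl is nonnegative, and that it vanishes if and only if the two pseudo-joints coincide. This is consistent with Step 2.

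There is no real obstacle here. The only point that needs care is normalization: the KL interpretation and the phrase ``as distributions'' both require that each pseudo-joint sums to one. I will therefore verify normalization explicitly in Step 2, along with the positivity convention from Step 1 that keeps every logarithm well defined.
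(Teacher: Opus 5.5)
Your proposal is correct and matches the paper's proof essentially step for step: you expand the two local pseudo-joints to get the log-ratio identity, deduce the equivalence from the ratio being $1$ pointwise, and average under $Q_{\theta,t}^{i\to j}$ to get the KL divergence by definition. Your explicit normalization check and strict-positivity convention are details the paper leaves implicit, and they are worth keeping.
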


\noindent\textbf{Proof idea.} Expanding the two pseudo-joints gives the displayed log-ratio, and averaging that identity under $Q_{\theta,t}^{i\to j}$ yields the KL term; Appendix~A gives the full proof.

This is the key interpretation: curl is the log-density ratio between two denoising orders on the same local block. Because it is built entirely from conditional log-probabilities, it is automatically invariant to additive logit shifts. At the sample level we may aggregate these quantities as
\[
\mathrm{ECircAbs}(x_S,t)=\E_{i,j,a,b}\bigl[\lvert\Curl_{ij}^{a,b}(x_S,t)\rvert\bigr],
\]
and for cross-time or cross-model comparison one may normalize by the total magnitude of the four participating log-probability terms:
\[
\begin{aligned}
\widetilde{\Curl}_{ij}^{a,b}(x_S,t)
&=\\
&\frac{\lvert\Curl_{ij}^{a,b}(x_S,t)\rvert}{
\begin{aligned}[t]
&\lvert \log q_{\theta,t}(x_i=a\mid x_S)\rvert\\
&+\lvert \log q_{\theta,t}(x_j=b\mid x_S,x_i=a)\rvert\\
&+\lvert \log q_{\theta,t}(x_j=b\mid x_S)\rvert\\
&+\lvert \log q_{\theta,t}(x_i=a\mid x_S,x_j=b)\rvert
+\epsilon
\end{aligned}}.
\end{aligned}
\]
for a small $\epsilon>0$.

\subsection{Adjacent Swaps and Block-Level Path Dependence}

For a block $B$ and permutation $\pi$, recall the pseudo-joint
\[
Q_{\theta,t}^{\pi}(x_B\mid x_S)
=
\prod_{m=1}^{|B|} q_{\theta,t}(x_{\pi_m}\mid x_S,x_{\pi_{<m}}).
\]
Any two permutations can be connected by a sequence of adjacent swaps, but the local curl terms in the decomposition must be indexed by a specific swap path because the prefix context at each swap depends on the intermediate ordering.

\begin{theorem}[Path-specific adjacent-swap decomposition]
Let $\pi$ and $\pi'$ be two permutations of $B$, and fix an adjacent-swap path
\[
\Gamma:\qquad
\rho^{(0)}=\pi
\to \rho^{(1)}
\to \cdots
\to \rho^{(R)}=\pi'.
\]
Suppose the $r$th step swaps adjacent positions $k_r$ and $k_r+1$ in the current ordering $\rho^{(r-1)}$. Write
\[
i_r=\rho^{(r-1)}_{k_r},
\qquad
j_r=\rho^{(r-1)}_{k_r+1},
\]
and let
\[
S_r
=
S\cup
\{\rho^{(r-1)}_1,\dots,\rho^{(r-1)}_{k_r-1}\}
\]
be the prefix index set before the swapped adjacent pair. Then for every assignment $x_B$,
\[
\log \frac{Q_{\theta,t}^{\pi}(x_B\mid x_S)}{Q_{\theta,t}^{\pi'}(x_B\mid x_S)}
=
\sum_{r=1}^R
\Curl_{i_rj_r}^{x_{i_r},x_{j_r}}(x_{S_r},t).
\]
\end{theorem}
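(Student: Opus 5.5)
The plan is to telescope the log-ratio along the path $\Gamma$ and reduce each step to the local curl identity of the previous theorem. Fix an assignment $x_B$ and assume all conditionals in the pseudo-joints are positive at $x_B$, so that every logarithm is finite. For any path we can write
\[
\log \frac{Q_{\theta,t}^{\pi}(x_B\mid x_S)}{Q_{\theta,t}^{\pi'}(x_B\mid x_S)}
=\sum_{r=1}^R \log \frac{Q_{\theta,t}^{\rho^{(r-1)}}(x_B\mid x_S)}{Q_{\theta,t}^{\rho^{(r)}}(x_B\mid x_S)} .
\]
The intermediate terms cancel pairwise, and $\rho^{(0)}=\pi$, $\rho^{(R)}=\pi'$. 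It therefore suffices to prove the single-swap identity
\[
\log \frac{Q_{\theta,t}^{\rho^{(r-1)}}(x_B\mid x_S)}{Q_{\theta,t}^{\rho^{(r)}}(x_B\mid x_S)}=\Curl_{i_rj_r}^{x_{i_r},x_{j_r}}(x_{S_r},t).
\]

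For the single swap, write $\rho=\rho^{(r-1)}$, $\rho'=\rho^{(r)}$ and $k=k_r$. The two orderings agree at every slot $m\notin\{k,k+1\}$. We then compare the factors of each pseudo-joint slot by slot.

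\emph{Slots $m<k$.} The index and the prefix set $\{\rho_1,\dots,\rho_{m-1}\}$ are identical in both orderings, so the factors coincide.

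\emph{Slots $m>k+1$.} The index is the same in both orderings. The prefix set is $\{\rho_1,\dots,\rho_{m-1}\}$ in both, since swapping two entries inside the prefix does not change it as a set. The factors coincide because the conditional $q_{\theta,t}(x_{\rho_m}\mid x_S,x_{\rho_{<m}})$ depends on the prefix only through the set of conditioned indices and their values, not through their order.

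\emph{Slots $k$ and $k+1$.} Only these survive in the ratio. Under $\rho$ they give
\[
q_{\theta,t}(x_{i_r}\mid x_{S_r})\,q_{\theta,t}(x_{j_r}\mid x_{S_r},x_{i_r}) = Q^{i_r\to j_r}_{\theta,t}(x_{i_r},x_{j_r}\mid x_{S_r}).
\]
Under $\rho'$ they give $Q^{j_r\to i_r}_{\theta,t}(x_{i_r},x_{j_r}\mid x_{S_r})$. Here $S_r$ is exactly the prefix index set, and the context is the restriction $x_{S_r}$ of $(x_S,x_B)$.

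The previous theorem (local curl equals a pseudo-joint log-ratio), applied with context $x_{S_r}$, $i=i_r$, $j=j_r$, $a=x_{i_r}$ and $b=x_{j_r}$, identifies the log of this surviving ratio with $\Curl_{i_rj_r}^{x_{i_r},x_{j_r}}(x_{S_r},t)$. Summing over $r$ then completes the proof.

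The main obstacle is the bookkeeping in the slot comparison. Two points need care. First, the conditioning context at step $r$ is the prefix set of the current ordering $\rho^{(r-1)}$, not of $\pi$; this is why the decomposition is path-specific. Second, the claim that later slots cancel rests on the conditionals being functions of the conditioned set and its values rather than of the order. I will state that as the standing convention implicit in the notation $q_{\theta,t}(x_{\pi_m}\mid x_S,x_{\pi_{<m}})$. I will also note the positivity assumption needed for the logarithms to be finite.
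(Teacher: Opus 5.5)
Your proposal is correct and follows essentially the same route as the paper. The paper writes $\rho^{(r-1)}=(A_r,i_r,j_r,D_r)$ and $\rho^{(r)}=(A_r,j_r,i_r,D_r)$, cancels the common prefix and suffix factors (the suffix cancels because both $i_r$ and $j_r$ are already in the conditioning set), identifies the surviving ratio with the local curl at $x_{S_r}$, and telescopes along $\Gamma$. The paper uses the positivity assumption and the convention that conditioning depends on the set rather than the order only implicitly, so stating them explicitly is a small improvement rather than a departure.
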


\noindent\textbf{Proof.} See Appendix~A.

This theorem makes the path-dependence statement exact: path dependence is accumulated local curl along a chosen adjacent-swap path. The individual contexts $S_r$ depend on the path $\Gamma$, but the total sum does not, because it telescopes to the fixed endpoint quantity $\log Q_{\theta,t}^{\pi}-\log Q_{\theta,t}^{\pi'}$. The relevant ``circulation'' is therefore not an endpoint prediction difference at the same state, but the log-ratio between pseudo-joints induced by different factorization paths.

The next proposition upgrades this pathwise identity into an exact characterization of order consistency on a fixed block.

\begin{proposition}[Exact order consistency equals reachable curl-freeness]
Fix $x_S$, a block $B\subseteq [L]\setminus S$, and a time $t$. Assume every local conditional appearing in the relevant pseudo-joints is strictly positive. Then the following are equivalent:
\begin{enumerate}
\item For every pair of permutations $\pi,\pi'$ of $B$ and every assignment $x_B$,
\[
Q_{\theta,t}^{\pi}(x_B\mid x_S)=Q_{\theta,t}^{\pi'}(x_B\mid x_S).
\]
\item For every reachable context index set $C=S\cup A$ with $A\subseteq B$, every assignment $x_C$ extending $x_S$, every distinct $i,j\in B\setminus A$, and every $a,b\in\V$,
\[
\Curl_{ij}^{a,b}(x_C,t)=0.
\]
\end{enumerate}
Equivalently, exact arbitrary-order semantics on $B$ relative to $x_S$ are the same as discrete curl-freeness on every reachable elementary square inside $B$.
\end{proposition}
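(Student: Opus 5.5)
The direction (2)$\Rightarrow$(1) will follow from the path-specific adjacent-swap decomposition, and (1)$\Rightarrow$(2) from a construction that embeds an arbitrary reachable elementary square as one adjacent swap between two full permutations of $B$.

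\emph{(2)$\Rightarrow$(1).} Fix permutations $\pi,\pi'$ of $B$. Since adjacent transpositions generate the symmetric group, there is an adjacent-swap path $\Gamma$ from $\pi$ to $\pi'$ (for instance, bubble sort). The adjacent-swap decomposition theorem then gives, for every $x_B$,
\[
\log\frac{Q^\pi_{\theta,t}(x_B\mid x_S)}{Q^{\pi'}_{\theta,t}(x_B\mid x_S)}=\sum_{r=1}^R \Curl_{i_rj_r}^{x_{i_r},x_{j_r}}(x_{S_r},t).
\]
Each context $S_r=S\cup A_r$ has $A_r\subseteq B$, the pair $i_r,j_r$ lies in $B\setminus A_r$, and $x_{S_r}$ is the restriction of $x_B$ together with $x_S$. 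Hence every summand is an instance of a reachable square, and each vanishes by (2). Strict positivity makes all logarithms finite, so the two pseudo-joints are equal.

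\emph{(1)$\Rightarrow$(2).} Fix $A\subseteq B$, an assignment $x_C$ on $C=S\cup A$, distinct $i,j\in B\setminus A$, and $a,b\in\V$. Let $\sigma$ be any ordering of $A$ and $\tau$ any ordering of $B\setminus(A\cup\{i,j\})$. Define $\pi=(\sigma,i,j,\tau)$ and $\pi'=(\sigma,j,i,\tau)$, and extend $x_C$ with $x_i=a$, $x_j=b$ and arbitrary tokens on the positions of $\tau$, giving a full assignment $x_B$. The path from $\pi$ to $\pi'$ is a single adjacent swap, so the decomposition theorem (or direct expansion) shows that the log-ratio of $Q^\pi$ to $Q^{\pi'}$ equals $\Curl_{ij}^{a,b}(x_C,t)$. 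Concretely, the factors over $\sigma$ coincide, and so do the factors over $\tau$, since they condition on the same set $S\cup A\cup\{i,j\}$ with the same values. By (1) this log-ratio is $0$, and positivity again guarantees the logs are defined.

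The main obstacle is the bookkeeping in the second direction. The pair $(i,j)$ must appear adjacent immediately after exactly the prefix $A$, and the shared suffix factors must be seen to cancel. This cancellation holds because each conditional depends only on the set of previously revealed positions and their values, not on the order in which they were revealed. The positivity assumption is used only to keep every logarithm finite, which makes equality of pseudo-joints equivalent to a vanishing log-ratio.
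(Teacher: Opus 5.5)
Your proposal is correct and follows essentially the same route as the paper's proof. You get (2)$\Rightarrow$(1) from the adjacent-swap decomposition (Theorem~2) applied along a path whose swap contexts are all reachable. You get (1)$\Rightarrow$(2) by embedding the square as the single swap $(\sigma,i,j,\tau)\to(\sigma,j,i,\tau)$ and cancelling the shared prefix and suffix factors, with positivity used exactly as the paper uses it.
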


\noindent\textbf{Proof.} See Appendix~A.

This proposition makes local curl an exact semantic criterion. On a fixed block, a DLM is genuinely arbitrary-order if and only if every reachable elementary square is curl-free. Any non-zero curl on a reachable square obstructs exact arbitrary-order consistency on that block.

\subsection{Pseudo-Joint Gaps Versus Decoder Commutators}

The pseudo-joint identities above are model-level statements about the family of conditionals induced by the denoiser. A practical decoder adds another layer of path sensitivity. Let $P_\theta(\cdot\mid z,t)$ denote the predictive object over unresolved coordinates at state $z$ and time $t$, and let $U_i$ denote a local update rule at position $i$. Then the operator commutator
\begin{equation}
\begin{aligned}
\mu_{ij}^{U}(z,t)
&\coloneqq
\mathbb{P}_{\theta}(\cdot \mid U_i U_j z,t), \\
\operatorname{Comm}_{ij}^{(U,D)}(z,t)
&\coloneqq
D\!\left(
\mu_{ij}^{U}(z,t),
\mu_{ji}^{U}(z,t)
\right).
\end{aligned}
\label{eq:operator-commutator}
\end{equation}
measures whether a chosen rule turns a pseudo-joint order gap into a discrepancy between downstream predictive objects. High curl does not force every decoder to diverge, but commit-style operators such as argmax writing, thresholded updates, and finite-support sampling are natural mechanisms by which pseudo-joint mismatch becomes decoding-time path sensitivity.

\section{Bayes Limits, Parallel Independence Error, and Order Preference}

The pseudo-joint view clarifies three further questions: when curl must vanish, why parallel decoding can still fail even when curl is zero, and how a model can acquire a preferred order without any theorem saying that language itself generates non-zero curl.

\begin{theorem}[Bayes-optimal uniform masking implies zero curl]
Suppose the denoiser is trained under a uniform masking objective and, in the infinite-capacity Bayes-optimal limit, satisfies
\[
q_{\theta,t}(x_i\mid x_S)=p_{\mathrm{data}}(x_i\mid x_S)
\]
for every visible context $x_S$ and unresolved coordinate $i$. Then for every $x_S,t,i,j,a,b$,
\[
\begin{aligned}
Q_{\theta,t}^{i\to j}(a,b\mid x_S)
&=p_{\mathrm{data}}(a,b\mid x_S)\\
&=Q_{\theta,t}^{j\to i}(a,b\mid x_S),
\end{aligned}
\]
and therefore
\[
\Curl_{ij}^{a,b}(x_S,t)=0.
\]
\end{theorem}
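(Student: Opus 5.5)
The plan is a direct substitution into the two local pseudo-joints, followed by the chain rule of probability for $p_{\mathrm{data}}$, and then an appeal to the first theorem of Section~4 (local curl equals a pseudo-joint log-ratio).

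First, I will pin down how the Bayes-optimality hypothesis applies to the conditionals that appear in the pseudo-joints. The factors in $Q_{\theta,t}^{i\to j}(a,b\mid x_S)$ are $q_{\theta,t}(x_i=a\mid x_S)$ and $q_{\theta,t}(x_j=b\mid x_S,x_i=a)$. The second factor is a denoiser conditional whose visible context is $S\cup\{i\}$, with assignment $(x_S,a)$. The hypothesis is stated "for every visible context", and uniform masking places positive weight on every mask pattern. So I will apply it with context $S\cup\{i\}$ and obtain $q_{\theta,t}(x_j=b\mid x_S,x_i=a)=p_{\mathrm{data}}(x_j=b\mid x_S,x_i=a)$. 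The same reasoning, with the roles of $i$ and $j$ exchanged, handles both factors of $Q_{\theta,t}^{j\to i}$.

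Second, I will apply the chain rule under $p_{\mathrm{data}}$:
\[
p_{\mathrm{data}}(a\mid x_S)\,p_{\mathrm{data}}(b\mid x_S,x_i{=}a)=p_{\mathrm{data}}(x_i{=}a,x_j{=}b\mid x_S)=p_{\mathrm{data}}(b\mid x_S)\,p_{\mathrm{data}}(a\mid x_S,x_j{=}b).
\]
This gives the displayed chain of equalities $Q^{i\to j}=p_{\mathrm{data}}(a,b\mid x_S)=Q^{j\to i}$. The claim $\Curl_{ij}^{a,b}(x_S,t)=0$ then follows from the log-ratio identity of the first theorem of Section~4, or directly by cancelling the four logarithms in the definition of curl.

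The main obstacle is conditioning on events of probability zero. If $p_{\mathrm{data}}(x_S)=0$, or if $p_{\mathrm{data}}(x_i=a\mid x_S)=0$, the conditionals on extended contexts are undefined and the logarithms in $\Curl$ diverge. I will handle this as follows. The pseudo-joint equalities hold whenever the conditionals are defined: if $p_{\mathrm{data}}(a\mid x_S)=0$, both products vanish regardless of how the undefined factor is assigned, since the joint $p_{\mathrm{data}}(a,b\mid x_S)$ is then zero. The curl statement I will state for pairs $(a,b)$ with $p_{\mathrm{data}}(a,b\mid x_S)>0$, in line with the positivity assumption of the earlier Proposition on exact order consistency. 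Alternatively, I can assume full support of $p_{\mathrm{data}}$ on the relevant contexts. I will also note that $t$ plays no role, because the optimal conditional does not depend on $t$ under an absorbing uniform mask.
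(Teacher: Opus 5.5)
Your proposal is correct and follows the paper's proof essentially step for step: you apply the Bayes-optimal hypothesis to the extended visible context $S\cup\{i\}$ (resp.\ $S\cup\{j\}$), use the chain rule for $p_{\mathrm{data}}$ to identify both local pseudo-joints with $p_{\mathrm{data}}(a,b\mid x_S)$, and conclude via the log-ratio identity of Theorem~1. Your extra care about zero-probability contexts, which restricts the curl claim to pairs with $p_{\mathrm{data}}(a,b\mid x_S)>0$, is a sensible refinement of a point the paper leaves implicit, not a departure from its argument.
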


\noindent\textbf{Proof idea.} Under Bayes optimality, both factorization orders equal the same data conditional $p_{\mathrm{data}}(a,b\mid x_S)$ by the chain rule, so the curl vanishes; Appendix~A gives the full proof.

This theorem prevents an overly strong interpretation of the framework. Data directionality alone does not imply non-zero compatibility curl. Non-zero curl is a property of the learned denoiser away from Bayes-optimal compatibility, arising from finite capacity, incomplete coverage of masking contexts, imperfect optimization, calibration mismatch, or decoding-time approximation.

\begin{theorem}[Independent parallel updates incur conditional total correlation]
For a block $B$, define the one-shot independent parallel approximation
\[
Q_{\theta,t}^{\mathrm{par}}(x_B\mid x_S)=\prod_{i\in B} q_{\theta,t}(x_i\mid x_S).
\]
If the denoiser is Bayes optimal so that $q_{\theta,t}=p$, then
\[
\begin{aligned}
&\KL\Bigl(p(x_B\mid x_S)
\,\Big\|\,
Q_{\theta,t}^{\mathrm{par}}(x_B\mid x_S)\Bigr)\\
&\quad=\KL\Bigl(p(x_B\mid x_S)
\,\Big\|\,
\prod_{i\in B} p(x_i\mid x_S)\Bigr)\\
&\quad=\TC(X_B\mid X_S),
\end{aligned}
\]
where
\[
\begin{aligned}
\TC(X_B\mid X_S)
=\sum_{i\in B}H(X_i\mid X_S)
-H(X_B\mid X_S).
\end{aligned}
\]
\end{theorem}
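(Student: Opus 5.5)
The plan is to substitute the Bayes-optimality hypothesis into the definition of the one-shot parallel approximation and then expand the KL divergence into entropies. There are two equalities to prove.

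\textbf{First equality.} By assumption $q_{\theta,t}(x_i\mid x_S)=p(x_i\mid x_S)$ for every $i\in B$ and every visible context $x_S$, which is the same hypothesis as in the preceding Bayes-optimality theorem. Substituting gives $Q_{\theta,t}^{\mathrm{par}}(x_B\mid x_S)=\prod_{i\in B}p(x_i\mid x_S)$ pointwise in $x_B$. The first equality then follows immediately.

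\textbf{Second equality.} I expand the KL divergence under the joint conditional $p(x_B\mid x_S)$:
\[
\KL\Bigl(p(x_B\mid x_S)\,\Big\|\,\prod_{i\in B}p(x_i\mid x_S)\Bigr)
=\E_{p(x_B\mid x_S)}\bigl[\log p(X_B\mid x_S)\bigr]-\sum_{i\in B}\E_{p(x_B\mid x_S)}\bigl[\log p(X_i\mid x_S)\bigr].
\]
The first term is $-H(X_B\mid X_S=x_S)$. For the $i$th summand the integrand depends only on $x_i$, so the expectation reduces to one under the marginal $p(x_i\mid x_S)$, which gives $-H(X_i\mid X_S=x_S)$. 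Rearranging yields $\sum_{i\in B}H(X_i\mid X_S=x_S)-H(X_B\mid X_S=x_S)$.

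\textbf{Main obstacle: what $\TC(X_B\mid X_S)$ means.} The left-hand side is a KL divergence at a fixed context $x_S$. The right-hand side is written with conditional entropies $H(\cdot\mid X_S)$, which in standard notation are averaged over $X_S$. I will adopt the pointwise reading: $\TC(X_B\mid X_S)$ denotes the total correlation of $X_B$ under $p(\cdot\mid x_S)$, i.e.\ it uses the entropies $H(\cdot\mid X_S=x_S)$. I will add a remark that averaging both sides over $x_S\sim p_{\mathrm{data}}$ recovers the usual averaged conditional total correlation.

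\textbf{Remaining technicalities.} Nonnegativity of the result follows from Gibbs' inequality. Finiteness requires that $p(x_B\mid x_S)>0$ imply $\prod_{i\in B}p(x_i\mid x_S)>0$. This holds automatically: each factor $p(x_i\mid x_S)$ is a marginal of $p(x_B\mid x_S)$, so each is positive on the support of the joint. No earlier curl results are needed. The point of the statement is that this error persists even though the preceding theorem gives zero curl under the same hypothesis.
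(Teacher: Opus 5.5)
Your proposal is correct and follows the paper's proof: substitute $q_{\theta,t}=p$ into $Q_{\theta,t}^{\mathrm{par}}$ to get the first equality, then split the KL divergence into the joint-entropy term and the marginal-entropy terms. The only difference is the handling of $X_S$: the paper expands under the joint law of $(X_B,X_S)$, so it reaches the averaged conditional entropies directly and implicitly reads the left-hand KL as averaged over contexts, whereas you work at a fixed $x_S$ and average afterwards; your explicit treatment of this notational ambiguity, of the marginalization step, and of the support condition is more careful than the paper's.
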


\noindent\textbf{Proof idea.} Substituting the Bayes-optimal condition $q=p$ reduces the KL term to the conditional total-correlation definition, and the entropy form follows by expansion; Appendix~A gives the full proof.

Parallel degradation has multiple sources. Even with zero compatibility curl, one-shot independent updates can fail whenever unresolved coordinates remain strongly conditionally dependent. A useful conceptual decomposition is therefore:
\[
\begin{aligned}
&\text{parallel degradation}\\
&\qquad\approx \text{conditional dependence / }\TC\\
&\qquad\phantom{\approx}+\text{ model incompatibility / curl}\\
&\qquad\phantom{\approx}+\text{ estimation and calibration error}.
\end{aligned}
\]
This display should be read as a structural decomposition of failure modes, not as a claim of an exact additive theorem.

\begin{theorem}[Order preference is governed by order-specific conditional estimation error]
For any order $\pi$,
\[
\begin{aligned}
&\E_p\bigl[-\log Q_{\theta,t}^{\pi}(X_B\mid X_S)\bigr]\\
&=H_p(X_B\mid X_S)
+\KL\bigl(p(X_B\mid X_S)
\,\|\,
Q_{\theta,t}^{\pi}(X_B\mid X_S)\bigr).
\end{aligned}
\]
Moreover, the KL term decomposes as
\[
\begin{aligned}
&\KL\bigl(p(X_B\mid X_S)
\,\|\,
Q_{\theta,t}^{\pi}(X_B\mid X_S)\bigr)\\
&\qquad=
\sum_{m=1}^{|B|}
\E_p\Bigl[
\begin{aligned}[t]
\KL\bigl(
p(X_{\pi_m}\mid X_S,X_{\pi_{<m}})
\,\|\, q_{\theta,t}(X_{\pi_m}\mid X_S,X_{\pi_{<m}})
\bigr)
\end{aligned}
\Bigr].
\end{aligned}
\]
\end{theorem}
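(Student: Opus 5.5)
The plan is to treat this as the standard cross-entropy decomposition followed by a chain rule for KL divergence, applied to the pseudo-joint $Q_{\theta,t}^{\pi}$. For the first identity, I add and subtract $\log p(X_B\mid X_S)$ inside the expectation:
\[
-\log Q_{\theta,t}^{\pi}(X_B\mid X_S) = -\log p(X_B\mid X_S) + \log\frac{p(X_B\mid X_S)}{Q_{\theta,t}^{\pi}(X_B\mid X_S)}.
\]
Taking $\E_p$ then gives $H_p(X_B\mid X_S)$ plus the KL term. Here I read both sides as averaged over $X_S$ as well, which matches the conditional-entropy convention already used in the total-correlation theorem. One point needs care: $Q_{\theta,t}^{\pi}$ must be a normalized distribution on $\V^{|B|}$ for each $x_S$ for the second term to be a genuine KL. I would check this by summing out coordinates in the reverse order $\pi_{|B|},\pi_{|B|-1},\dots$, using that each $q_{\theta,t}(\cdot\mid x_S,x_{\pi_{<m}})$ sums to one. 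If some $q$ vanishes where $p>0$, both sides equal $+\infty$ and the identity still holds.

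For the decomposition, I factor $p$ along the same order $\pi$ using the exact chain rule:
\[
p(x_B\mid x_S)=\prod_m p(x_{\pi_m}\mid x_S,x_{\pi_{<m}}).
\]
The log-ratio $\log(p/Q_{\theta,t}^{\pi})$ then splits into a sum of per-step log-ratios $\log\bigl(p(x_{\pi_m}\mid\cdot)/q_{\theta,t}(x_{\pi_m}\mid\cdot)\bigr)$. Taking $\E_p$ of the $m$th term, I apply the tower property, conditioning on $(X_S,X_{\pi_{<m}})$. The inner expectation over $X_{\pi_m}\sim p(\cdot\mid X_S,X_{\pi_{<m}})$ is exactly the conditional KL between $p(X_{\pi_m}\mid X_S,X_{\pi_{<m}})$ and $q_{\theta,t}(X_{\pi_m}\mid X_S,X_{\pi_{<m}})$. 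The outer expectation is over the $p$-marginal of the prefix. Summing over $m$ gives the stated formula.

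The main obstacle is bookkeeping rather than any real difficulty. Three things need attention: interpreting the expectation in the displayed formula as being over the prefix $(X_S,X_{\pi_{<m}})$ under $p$; justifying the interchange of the finite sum with the expectation when some terms are infinite; and confirming normalization of $Q_{\theta,t}^{\pi}$. I would handle infinities by noting that every summand is nonnegative, since each is a KL. Linearity of expectation over finite sums of extended nonnegative quantities is then valid, so no positivity assumption is needed.
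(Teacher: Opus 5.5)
Your proposal is correct and follows essentially the same route as the paper: rearrange cross-entropy into $H_p(X_B\mid X_S)$ plus the KL term, factor both $p$ and $Q_{\theta,t}^{\pi}$ along $\pi$, split the log-ratio into per-step terms, and condition on $(X_S,X_{\pi_{<m}})$ to obtain the per-step KLs. Your extra checks on the normalization of $Q_{\theta,t}^{\pi}$ and on possibly infinite terms are sound additions that the paper leaves implicit.
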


\noindent\textbf{Proof.} See Appendix~A.

Define the order-specific local estimation error
\[
\epsilon_i(C)=\KL\bigl(p(X_i\mid C)\,\|\,q_{\theta,t}(X_i\mid C)\bigr).
\]
Then the best order is the one whose induced context distribution makes the cumulative expected local errors smallest. In particular, if prefix-like contexts are easier for the model than arbitrary mixed-mask contexts, the preferred decoding path can naturally become left-to-right even without any theorem saying that language itself forces signed curl.

\section{Mechanism: Why DLMs Drift Toward AR-Like Decoding}

The preceding theorems suggest a more careful mechanism than ``language is directional, therefore curl is non-zero.'' First, Theorem~3 shows that at Bayes-optimal uniform masking the local curl vanishes. Data directionality alone is therefore insufficient. Second, real denoisers operate under finite capacity, imperfect optimization, incomplete coverage of rare masking patterns, and approximate decoding. These factors generate pseudo-joint mismatch and non-zero local curl. Third, once some mismatch exists, Theorem~5 shows that order preference is governed by the contexts on which the model's local conditional error is smallest. If prefix-like contexts are easier to estimate than arbitrary mixed-mask contexts, a left-to-right schedule becomes a natural low-error path. Fourth, Theorem~4 shows that even near-zero curl does not make fully parallel decoding safe when unresolved coordinates retain large conditional total correlation.

Autoregressive-like collapse arises from the interaction of three ingredients: learned pseudo-joint mismatch, residual within-block dependence, and order-specific conditional estimation error. Better parallel DLMs must address all three.

\section{Validation Interface}

The framework yields a concrete validation interface. The empirical task is to test separate predictions about pseudo-joint order gaps, conditional dependence inside the block, and order-specific estimation error.

\paragraph{Order-gap prediction.}
For a fixed model and task, larger absolute local curl or larger expected two-order KL should predict stronger disagreement between order-induced pseudo-joints and greater fragility when the decoding order is perturbed.

\paragraph{TC versus curl.}
Even when pairwise curl is small, blocks with high conditional total correlation should remain resistant to one-shot independent parallel updates. This predicts a failure regime that is not explained by incompatibility alone.

\paragraph{Order-specific error profiling.}
If a scheduler performs substantially better than another, the advantage should correlate with the contexts it induces for subsequent predictions. Strong schedules should move the model onto lower-error conditional contexts, not just lower-entropy positions.

\paragraph{Trajectory diversification.}
If training exposes the model to multiple equivalent recovery trajectories or explicitly enforces cross-order agreement, then pseudo-joint order gaps should decrease and order-specific estimation asymmetry should weaken.

\paragraph{Synthetic control tasks.}
Synthetic data should allow separate manipulation of within-block dependence and learned incompatibility. Tasks with the same conditional dependence but different training coverage or architectural constraints should separate TC-driven failure from curl-driven failure.

\section{Operationalization}

The validation interface can be instantiated without retraining large models. The essential requirement is access to token conditionals or logits at fixed reverse-time states, so that pseudo-joint ratios, local estimation errors, and post-update predictive objects can be queried directly.

\subsection{Model Access and Scope}

A practical starting point is to use publicly accessible DLMs or masked diffusion models for which local conditional probabilities or logits can be queried directly, such as Plaid-style likelihood-trained diffusion models \citep{gulrajani2023plaid}, MDLM \citep{sahoo2024mdlm}, SEDD \citep{lou2024sedd}, and LLaDA \citep{nie2025llada}. If compute is limited, an initial study can focus on smaller checkpoints and inference-only diagnostics before scaling up the same measurements. For reproducibility, the implementation should report checkpoint identifiers, parameter scales, tokenizer details, reverse-time schedule, and hardware.

\subsection{Pseudo-Joint Gap Estimation}

For each sample at a chosen time step $t$, select active coordinate pairs $(i,j)$ and candidate token pairs $(a,b)$. Compute the local curl
\[
\Curl_{ij}^{a,b}(x_S,t)
=\log \frac{Q_{\theta,t}^{i\to j}(a,b\mid x_S)}{Q_{\theta,t}^{j\to i}(a,b\mid x_S)}
\]
and summarize both absolute and normalized versions. A useful companion statistic is the empirical two-order KL from Theorem~1, estimated by averaging curl under samples drawn from one order-induced pseudo-joint.

\subsection{Commutator Diagnostics}

To stay close to actual decoding, explicitly run two local update paths under a chosen rule $U$:
\[
z_{ij}=U_j(U_i(z)),\qquad z_{ji}=U_i(U_j(z)).
\]
Then compare the downstream predictive objects:
\[
\begin{aligned}
\Comm_{ij}^{(U,\sqrt{\JS})}(z,t)
=\Bigl[\JS\bigl(
P_\theta(\cdot\mid z_{ij},t),
P_\theta(\cdot\mid z_{ji},t)
\bigr)\Bigr]^{1/2}.
\end{aligned}
\]
This quantifies how much the pseudo-joint order gap survives the decoder and appears as prediction-level path sensitivity.

\subsection{Conditional-Dependence Controls}

For any candidate block $B$, estimate the independent-parallel gap
\[
\KL\Bigl(\widehat p(x_B\mid x_S)
\,\Big\|\,
\prod_{i\in B}\widehat p(x_i\mid x_S)\Bigr)
\]
or a tractable proxy such as the sum of pairwise conditional mutual informations inside $B$. These quantities serve as empirical controls for the TC term from Theorem~4 and prevent parallel-failure analyses from over-crediting curl.

\subsection{Order-Specific Error Profiling}

For each candidate order $\pi$, evaluate the cumulative conditional cross-entropy
\[
\sum_{m=1}^{|B|}
-\log q_{\theta,t}(x_{\pi_m}\mid x_S,x_{\pi_{<m}})
\]
on held-out data, and compare it with the same quantity under alternative orders. This directly measures the order-specific estimation-error profile from Theorem~5. In practice, one can stratify these losses by context type---prefix-like, random-mask, high-entropy, or synthetically controlled---to see where the model most strongly prefers a particular factorization path.

\subsection{Parallelism Stress Test}

For the same model and task, compare several decoding regimes: sequential recovery, small-block recovery, large-block recovery, and fully parallel or few-step recovery. Regress the performance drop under increased parallelism against three families of predictors: pseudo-joint order-gap statistics, TC proxies, and order-specific conditional error. This directly tests whether the framework explains failure better than entropy or confidence alone.

\subsection{Scheduler Analysis}

Compare left-to-right, random, confidence-based, and learned schedulers in terms of the blocks and contexts they induce. Strong schedulers should avoid both high-curl swaps and high-TC blocks, while also moving the model toward contexts on which its conditional error is lower. On this view, scheduler quality comes from navigating all three terms in the mechanism, not from confidence alone.

\subsection{Synthetic Dependency Tasks}

Construct synthetic settings that separately tune conditional dependence and compatibility. For example, one can hold the data-generating dependency graph fixed while varying training coverage over masking patterns, or hold masking coverage fixed while changing the data dependency graph from chain-like to exchangeable. Measuring curl, TC proxies, and order-specific error in these settings provides a clean causal test of the framework.

\section{Potential Regularization and Improvement Directions}

Although this paper is primarily analytical, the framework naturally suggests several improvement directions. The target is reduced incompatibility on coordinates that the system intends to update jointly or in arbitrary order, while preserving directional asymmetries that are useful for modeling language.

\subsection{Elementary-Circulation Regularization}

During training, one can sample position pairs and candidate token assignments and explicitly minimize a normalized compatibility penalty such as
\[
\mathcal{L}_{\mathrm{ecirc}}
=\E_{x_S,t,i,j,a,b}\bigl[(\widetilde{\Curl}_{ij}^{a,b}(x_S,t))^2\bigr].
\]
Applied selectively to candidate parallel blocks, this regularizer encourages nearby orders to induce more consistent pseudo-joints where parallel decoding is desired. Its main cost is additional computation, and overly aggressive regularization could suppress useful asymmetries, so the regularizer should target candidate parallel blocks.

\subsection{Commutator-Aware Scheduling}

At decoding time, block selection should trade off confidence, commutator conflict, and within-block dependence. For a candidate block $B$, define
\[
\mathrm{Conflict}(B)=\sum_{i,j\in B,\,i<j} \Comm_{ij}^{(U,D)}(z,t).
\]
A practical scheduler can then trade off ``high confidence,'' ``low commutator conflict,'' and ``manageable conditional dependence,'' providing a more structured way to control parallelism than token-wise confidence alone.

\subsection{Parallel Trajectory Supervision}

If training data provide only a single reasoning path, such as a standard chain-of-thought trajectory, then the model can easily internalize a strong directional bias in where incompatibility is tolerated. A direct alternative is to construct multiple equivalent reasoning trajectories, locally permuted supervision signals, or block-independent auxiliary targets so that the model sees evidence that the same answer can be recovered through different paths. This should reduce path bias at the data level, especially on coordinates that the decoder is expected to update jointly.

\subsection{Potential-Based DLMs}

A more ambitious direction is to explicitly learn a joint model $p_{\theta,t}(x)$ together with its potential $\Phi_\theta(x,t)=\log p_{\theta,t}(x)$, and then derive local denoising decisions from compatible conditionals. If successful, such models could enforce stronger integrability by construction. The challenge is that joint normalization and scalable training remain difficult in large-vocabulary, discrete-state, multi-step generation settings.

\section{Conclusion}

This paper reframes arbitrary-order denoising in diffusion language models through order-induced pseudo-joint distributions. The core condition is no longer verbal: for a block $B$ and partial context $S$, different denoising orders should induce approximately the same pseudo-joint $Q_{\theta,t}^{\pi}(x_B\mid x_S)$. Local curl is then exactly the log-ratio between two swapped pseudo-joints, and its expectation under one order is the corresponding order-swap KL divergence. Proposition~1 sharpens this further: exact order consistency on a fixed block is equivalent to curl-freeness on every reachable elementary square.

This viewpoint yields three clarifications. First, path dependence is accumulated local curl along adjacent swaps. Second, Bayes-optimal uniform masking implies zero curl, so data directionality alone does not force incompatibility. Third, parallel degradation has multiple components: even at zero curl, one-shot independent parallel updates incur a conditional-total-correlation penalty, and order preference is governed by the contexts on which the denoiser's conditional estimation error is smallest.

From a research-program perspective, the paper offers a criterion for what it means for a language diffusion model to be genuinely parallel: the interface must permit arbitrary-order generation, the induced pseudo-joints must remain stable across orders, the block dependence must be manageable at the intended update granularity, and the model must support stable context distributions across schedules. Future work can therefore proceed along three directions: measure pseudo-joint order gaps at scale, separate TC-driven failure from incompatibility-driven failure, and design training objectives that reduce harmful order-specific estimation asymmetries.

\section{Limitations}

This paper offers a mechanism-oriented framework, a set of formal diagnostics, and a concrete validation interface, but it does not yet include the broad empirical evidence required to establish the scope of the claim across many diffusion language model families. In particular, the paper does not yet show whether pseudo-joint-gap, TC, and order-specific-error measurements consistently outperform simpler proxies such as token entropy, confidence, or token distance when predicting failure under parallel decoding. The mechanism claim should therefore be read as a testable structural hypothesis rather than a closed empirical account.

A second limitation is that the key objects introduced here---local curl, order-induced pseudo-joints, operator commutators, and TC proxies---may be expensive to estimate for large models or long sequences. Their practical utility will depend on whether low-variance approximations can be computed efficiently enough to make the diagnostics usable at scale.

A third limitation is that the decomposition into incompatibility, conditional dependence, and estimation error is conceptually clean but not yet operationally unique. In practice these terms may interact, and empirical work will need careful controls to disentangle them.

A fourth limitation is conceptual. Low curl is not universally desirable, and neither is low conditional dependence. Language contains real structure. Our claim is therefore local and operational: arbitrary-order or fully parallel denoising becomes unsafe when the chosen update granularity outruns either compatibility or conditional-independence assumptions; it does not imply that diffusion language models are categorically inferior to autoregressive models \citep{nie2025llada}.

\bibliographystyle{plainnat}
\bibliography{refs}

\appendix

\section{Proofs and Technical Clarifications}

\subsection{Proof of Theorem~1}

\begin{proof}
By definition,
\[
Q_{\theta,t}^{i\to j}(a,b\mid x_S)
=
q_{\theta,t}(a\mid x_S)\,q_{\theta,t}(b\mid x_S,x_i{=}a)
\]
and
\[
Q_{\theta,t}^{j\to i}(a,b\mid x_S)
=
q_{\theta,t}(b\mid x_S)\,q_{\theta,t}(a\mid x_S,x_j{=}b).
\]
Taking the logarithm of the ratio gives
\[
\begin{aligned}
\log \frac{Q_{\theta,t}^{i\to j}(a,b\mid x_S)}{Q_{\theta,t}^{j\to i}(a,b\mid x_S)}
=&\;\log q_{\theta,t}(a\mid x_S)\\
&+\log q_{\theta,t}(b\mid x_S,x_i{=}a)\\
&-\log q_{\theta,t}(b\mid x_S)\\
&-\log q_{\theta,t}(a\mid x_S,x_j{=}b),
\end{aligned}
\]
which is exactly $\Curl_{ij}^{a,b}(x_S,t)$.

If $\Curl_{ij}^{a,b}(x_S,t)=0$ for all $a,b$, then the displayed ratio equals $1$ for all $a,b$, so
\[
Q_{\theta,t}^{i\to j}(a,b\mid x_S)=Q_{\theta,t}^{j\to i}(a,b\mid x_S)
\]
pointwise and hence the two distributions are equal. The converse is immediate from the same identity.

For the expectation claim,
\[
\begin{aligned}
&\E_{(a,b)\sim Q_{\theta,t}^{i\to j}(\cdot,\cdot\mid x_S)}
\bigl[\Curl_{ij}^{a,b}(x_S,t)\bigr]\\
&\qquad=
\sum_{a,b} Q_{\theta,t}^{i\to j}(a,b\mid x_S)
\log \frac{Q_{\theta,t}^{i\to j}(a,b\mid x_S)}{Q_{\theta,t}^{j\to i}(a,b\mid x_S)}\\
&\qquad=
\KL\bigl(
Q_{\theta,t}^{i\to j}(\cdot,\cdot\mid x_S)
\,\|\,
Q_{\theta,t}^{j\to i}(\cdot,\cdot\mid x_S)
\bigr),
\end{aligned}
\]
which is the definition of KL divergence.
\end{proof}

\subsection{Proof of Theorem~2}

\begin{proof}
For each step $r$, let
\[
A_r=(\rho^{(r-1)}_1,\dots,\rho^{(r-1)}_{k_r-1})
\qquad\text{and}\qquad
D_r=(\rho^{(r-1)}_{k_r+2},\dots,\rho^{(r-1)}_{|B|}),
\]
so that
\[
\rho^{(r-1)}=(A_r,i_r,j_r,D_r)
\qquad\text{and}\qquad
\rho^{(r)}=(A_r,j_r,i_r,D_r).
\]
Also write $C_r=S\cup A_r$, which is exactly the prefix index set $S_r$ in the theorem statement. Expanding the sequential products along the two adjacent orders gives
\[
\begin{aligned}
Q_{\theta,t}^{\rho^{(r-1)}}(x_B\mid x_S)
&=Q_{A_r}(x_{A_r}\mid x_S)\,q_{\theta,t}(x_{i_r}\mid x_{C_r})\\
&\quad\cdot q_{\theta,t}(x_{j_r}\mid x_{C_r},x_{i_r})\\
&\quad\cdot Q_{D_r}(x_{D_r}\mid x_{C_r},x_{i_r},x_{j_r}),
\end{aligned}
\]
and
\[
\begin{aligned}
Q_{\theta,t}^{\rho^{(r)}}(x_B\mid x_S)
&=Q_{A_r}(x_{A_r}\mid x_S)\,q_{\theta,t}(x_{j_r}\mid x_{C_r})\\
&\quad\cdot q_{\theta,t}(x_{i_r}\mid x_{C_r},x_{j_r})\\
&\quad\cdot Q_{D_r}(x_{D_r}\mid x_{C_r},x_{i_r},x_{j_r}).
\end{aligned}
\]
The prefix factor $Q_{A_r}$ is identical in both expressions, and the suffix factor $Q_{D_r}$ is also identical because after the adjacent swap both $i_r$ and $j_r$ have entered the conditioning context. Therefore,
\[
\begin{aligned}
&\log \frac{Q_{\theta,t}^{\rho^{(r-1)}}(x_B\mid x_S)}{Q_{\theta,t}^{\rho^{(r)}}(x_B\mid x_S)}
=\\
&\log q_{\theta,t}(x_{i_r}\mid x_{C_r})
+\log q_{\theta,t}(x_{j_r}\mid x_{C_r},x_{i_r})\\
&-\log q_{\theta,t}(x_{j_r}\mid x_{C_r})
-\log q_{\theta,t}(x_{i_r}\mid x_{C_r},x_{j_r})\\
&=
\Curl_{i_rj_r}^{x_{i_r},x_{j_r}}(x_{C_r},t).
\end{aligned}
\]
Summing these consecutive log-ratios along the path $\Gamma$ telescopes:
\[
\begin{aligned}
\log \frac{Q_{\theta,t}^{\pi}(x_B\mid x_S)}{Q_{\theta,t}^{\pi'}(x_B\mid x_S)}
&=
\sum_{r=1}^{R}
\log \frac{Q_{\theta,t}^{\rho^{(r-1)}}(x_B\mid x_S)}{Q_{\theta,t}^{\rho^{(r)}}(x_B\mid x_S)}\\
&=
\sum_{r=1}^{R}
\Curl_{i_rj_r}^{x_{i_r},x_{j_r}}(x_{S_r},t),
\end{aligned}
\]
which is the claimed decomposition.
\end{proof}

\subsection{Proof of Proposition~1}

\begin{proof}
Assume first that (ii) holds. Let $\pi$ and $\pi'$ be any two permutations of $B$, and let $\Gamma$ be any adjacent-swap path from $\pi$ to $\pi'$. Every swap along $\Gamma$ occurs at some reachable prefix context $C_r=S\cup A_r$ with $A_r\subseteq B$, so by (ii) each term in Theorem~2 is zero. Hence for every assignment $x_B$,
\[
\log \frac{Q_{\theta,t}^{\pi}(x_B\mid x_S)}{Q_{\theta,t}^{\pi'}(x_B\mid x_S)}=0.
\]
Because the relevant pseudo-joint factors are strictly positive, the logarithm is well defined and the last display implies
\[
Q_{\theta,t}^{\pi}(x_B\mid x_S)=Q_{\theta,t}^{\pi'}(x_B\mid x_S).
\]
This proves (i).

Conversely, assume (i). Fix a reachable context index set $C=S\cup A$ with $A\subseteq B$, an assignment $x_C$ extending $x_S$, distinct $i,j\in B\setminus A$, and tokens $a,b\in\V$. Let
\[
D=B\setminus (A\cup\{i,j\}),
\]
choose any permutation $\alpha$ of $A$, and choose any permutation $\delta$ of $D$. Consider the two full orders
\[
\pi=(\alpha,i,j,\delta)
\qquad\text{and}\qquad
\pi'=(\alpha,j,i,\delta).
\]
By (i), for every completion $x_D$ of the remaining coordinates,
\[
Q_{\theta,t}^{\pi}(x_A,a,b,x_D\mid x_S)
=
Q_{\theta,t}^{\pi'}(x_A,a,b,x_D\mid x_S).
\]
Expanding both sides, the prefix factors determined by $\alpha$ are identical, and the suffix factors determined by $\delta$ are also identical because in both orders they condition on the same realized variables $x_A$, $x_i=a$, and $x_j=b$. Cancelling these common factors yields
\[
q_{\theta,t}(a\mid x_C)
q_{\theta,t}(b\mid x_C,x_i{=}a)\\
=
q_{\theta,t}(b\mid x_C)
q_{\theta,t}(a\mid x_C,x_j{=}b).
\]
By Theorem~1, this equality of the two local pseudo-joints is equivalent to
\[
\Curl_{ij}^{a,b}(x_C,t)=0.
\]
Since $C$, $x_C$, $i$, $j$, $a$, and $b$ were arbitrary, (ii) follows.
\end{proof}

\subsection{Proof of Theorem~3}

\begin{proof}
Fix $x_S$, $t$, $i$, $j$, $a$, and $b$. By the Bayes-optimal assumption,
\[
q_{\theta,t}(a\mid x_S)=p_{\mathrm{data}}(a\mid x_S)
\]
and, because $x_S$ together with $x_i=a$ is also a visible context for coordinate $j$,
\[
q_{\theta,t}(b\mid x_S,x_i{=}a)=p_{\mathrm{data}}(b\mid x_S,x_i{=}a).
\]
Therefore,
\[
\begin{aligned}
Q_{\theta,t}^{i\to j}(a,b\mid x_S)
&=q_{\theta,t}(a\mid x_S)
q_{\theta,t}(b\mid x_S,x_i{=}a)\\
&=p_{\mathrm{data}}(a\mid x_S)
p_{\mathrm{data}}(b\mid x_S,x_i{=}a)\\
&=p_{\mathrm{data}}(a,b\mid x_S),
\end{aligned}
\]
where the last step is the chain rule. The same argument with $i$ and $j$ swapped gives
\[
\begin{aligned}
Q_{\theta,t}^{j\to i}(a,b\mid x_S)
&=q_{\theta,t}(b\mid x_S)
q_{\theta,t}(a\mid x_S,x_j{=}b)\\
&=p_{\mathrm{data}}(b\mid x_S)
p_{\mathrm{data}}(a\mid x_S,x_j{=}b)\\
&=p_{\mathrm{data}}(a,b\mid x_S).
\end{aligned}
\]
Hence
\[
Q_{\theta,t}^{i\to j}(a,b\mid x_S)
=p_{\mathrm{data}}(a,b\mid x_S)
=Q_{\theta,t}^{j\to i}(a,b\mid x_S).
\]
Applying Theorem~1 then gives $\Curl_{ij}^{a,b}(x_S,t)=0$.
\end{proof}

\subsection{Proof of Theorem~4}

\begin{proof}
Under the Bayes-optimal condition $q_{\theta,t}=p$, the independent parallel approximation becomes
\[
Q_{\theta,t}^{\mathrm{par}}(x_B\mid x_S)=\prod_{i\in B} p(x_i\mid x_S).
\]
Substituting this into the KL divergence yields
\[
\begin{aligned}
&\KL\Bigl(p(x_B\mid x_S)
\,\Big\|\,
Q_{\theta,t}^{\mathrm{par}}(x_B\mid x_S)\Bigr)\\
&\qquad=
\KL\Bigl(p(x_B\mid x_S)
\,\Big\|\,
\prod_{i\in B} p(x_i\mid x_S)\Bigr).
\end{aligned}
\]
By the definition of conditional total correlation, the right-hand side is $\TC(X_B\mid X_S)$.

For the entropy form, expand the KL divergence under the joint law of $(X_B,X_S)$:
\[
\begin{aligned}
\TC(X_B\mid X_S)
&=
\E_p\Biggl[
\log \frac{p(X_B\mid X_S)}{\prod_{i\in B} p(X_i\mid X_S)}
\Biggr]\\
&=
\E_p\bigl[\log p(X_B\mid X_S)\bigr]
-\sum_{i\in B}\E_p\bigl[\log p(X_i\mid X_S)\bigr]\\
&=
-H(X_B\mid X_S)+\sum_{i\in B} H(X_i\mid X_S),
\end{aligned}
\]
which is exactly
\[
\TC(X_B\mid X_S)=\sum_{i\in B}H(X_i\mid X_S)-H(X_B\mid X_S).
\]
\end{proof}

\subsection{Proof of Theorem~5}

\begin{proof}
For the first identity,
\[
\begin{aligned}
&\KL\bigl(p(X_B\mid X_S)
\,\|\,
Q_{\theta,t}^{\pi}(X_B\mid X_S)\bigr)\\
&\qquad=
\E_p\bigl[\log p(X_B\mid X_S)-\log Q_{\theta,t}^{\pi}(X_B\mid X_S)\bigr].
\end{aligned}
\]
Rearranging gives
\[
\begin{aligned}
&\E_p\bigl[-\log Q_{\theta,t}^{\pi}(X_B\mid X_S)\bigr]
=\\
&-\E_p\bigl[\log p(X_B\mid X_S)\bigr]
+\KL\bigl(p(X_B\mid X_S)
\,\|\,
Q_{\theta,t}^{\pi}(X_B\mid X_S)\bigr)\\
&=
H_p(X_B\mid X_S)
+\KL\bigl(p(X_B\mid X_S)
\,\|\,
Q_{\theta,t}^{\pi}(X_B\mid X_S)\bigr),
\end{aligned}
\]
which is the first claim.

For the second claim, apply the chain rule to both the target conditional and the pseudo-joint in the same order $\pi$:
\[
p(X_B\mid X_S)=\prod_{m=1}^{|B|} p(X_{\pi_m}\mid X_S,X_{\pi_{<m}})
\]
and
\[
Q_{\theta,t}^{\pi}(X_B\mid X_S)=\prod_{m=1}^{|B|} q_{\theta,t}(X_{\pi_m}\mid X_S,X_{\pi_{<m}}).
\]
Therefore,
\[
\begin{aligned}
&\KL\bigl(p(X_B\mid X_S)
\,\|\,
Q_{\theta,t}^{\pi}(X_B\mid X_S)\bigr)\\
&\qquad=
\E_p\Biggl[
\sum_{m=1}^{|B|}
\log \frac{p(X_{\pi_m}\mid X_S,X_{\pi_{<m}})}{q_{\theta,t}(X_{\pi_m}\mid X_S,X_{\pi_{<m}})}
\Biggr]\\
&\qquad=
\sum_{m=1}^{|B|}
\E_p\Biggl[
\log \frac{p(X_{\pi_m}\mid X_S,X_{\pi_{<m}})}{q_{\theta,t}(X_{\pi_m}\mid X_S,X_{\pi_{<m}})}
\Biggr].
\end{aligned}
\]
Now condition on $(X_S,X_{\pi_{<m}})$ and apply the definition of KL divergence inside each term:
\[
\begin{aligned}
&\E_p\Biggl[
\log \frac{p(X_{\pi_m}\mid X_S,X_{\pi_{<m}})}{q_{\theta,t}(X_{\pi_m}\mid X_S,X_{\pi_{<m}})}
\Biggr]\\
&\qquad=
\E_p\Bigl[
\KL\bigl(
p(X_{\pi_m}\mid X_S,X_{\pi_{<m}})
\,\|\,
q_{\theta,t}(X_{\pi_m}\mid X_S,X_{\pi_{<m}})
\bigr)
\Bigr].
\end{aligned}
\]
Summing over $m$ yields the claimed decomposition.
\end{proof}

\subsection{Why Pseudo-Joints Are the Right Starting Object}

A DLM denoiser natively provides local conditionals, not a jointly normalized density over an unresolved block. Once a decoding order is chosen, those local conditionals can always be multiplied into a sequential product $Q_{\theta,t}^{\pi}(x_B\mid x_S)$. This object is operationally unavoidable because it is exactly what a step-by-step decoder uses. The compatibility question is whether these pseudo-joints agree across orders.

\subsection{Why Logit-Shift Invariance Is the Identifiable Object}

If the model emits logits $\ell_i(a\mid x_S,t)$, then adding any scalar $c_i(x_S,t)$ to every logit at coordinate $i$ leaves the conditional distribution unchanged after softmax. Objects defined directly from raw logit levels are therefore not identifiable from the model's predictive semantics alone. By contrast, log-probability ratios and pseudo-joint log-ratios are invariant to such shifts, which is why the paper defines curl at that level.

\end{document}